\providecommand{\norm}[1]{\ensuremath{\left\lVert#1\right\rVert }}
\providecommand{\mnorm}[1]{\ensuremath{\left\lvert#1\right\rvert}}
\def\R{\mathbb{R}}
\newtheorem{theorem}{\bfseries Theorem}
\newtheorem{assumption}{\bfseries Assumption}
\DeclareRobustCommand{\bigO}{%
  \text{\usefont{OMS}{cmsy}{m}{n}O}%
}
\title{
Generalized AdaGrad (G-AdaGrad) and Adam:\\ A State-Space Perspective
}
\author{Kushal Chakrabarti$^1$, and Nikhil Chopra$^1$
\thanks{$^1$ University of Maryland, College Park, Maryland 20742, U.S.A.}
\thanks{Emails: {\em kchak@terpmail.umd.edu}, and {\em nchopra@umd.edu}}%
}
\date{}
\begin{document}

\maketitle
\thispagestyle{empty}
\pagestyle{empty}


\begin{abstract}
    Accelerated gradient-based methods are being extensively used for solving non-convex machine learning problems, especially when the data points are abundant or the available data is distributed across several agents. Two of the prominent accelerated gradient algorithms are AdaGrad and Adam. AdaGrad is the simplest accelerated gradient method, which is particularly effective for sparse data. Adam has been shown to perform favorably in deep learning problems compared to other methods. In this paper, we propose a new fast optimizer, Generalized AdaGrad (G-AdaGrad), for accelerating the solution of potentially non-convex machine learning problems. Specifically, we adopt a state-space perspective for analyzing the convergence of gradient acceleration algorithms, namely G-AdaGrad and Adam, in machine learning. Our proposed state-space models are governed by ordinary differential equations. We present simple convergence proofs of these two algorithms in the deterministic settings with minimal assumptions. Our analysis also provides intuition behind improving upon AdaGrad's convergence rate. We provide empirical results on \textit{MNIST} dataset to reinforce our claims on the convergence and performance of G-AdaGrad and Adam.
\end{abstract}
\section{Introduction}
\label{sec:intro}

In this paper, we consider the minimization problem
\begin{align}
    \min_{x \in \R^d} f(x), \label{eqn:opt_1}
\end{align}
where the {\em objective function} $f: \R^d \to \R^+$ is smooth and possibly non-convex. In machine learning, $f$ is typically approximated by the average of a large number of loss functions, each loss function being associated with individual training examples or mini-batches, and $x$ typically represents the unknown weights of a model. The goal is to find a {\em critical point} of the aggregate loss function $f$.

Several first-order iterative methods exist for solving the optimization problem~\eqref{eqn:opt_1}. First-order methods are preferred when the available data-size is large~\cite{bottou2018optimization}. Besides expensive computations, another drawback of second-order methods, such as Newton's method~\cite{kelley1999iterative}, is that for linear models these methods cannot be implemented over a distributed network, as the agents do not share their data points with the server~\cite{chakrabarti2020iterative}.

The classical gradient-descent method is the basic prototype of first-order optimization methods~\cite{bertsekas1989parallel}. Its stochastic version, known as the stochastic gradient-descent (SGD), has become a popular method for solving machine learning problems, especially the large-scale problems~\cite{bottou2018optimization}. Several accelerated variants of SGD have been proposed since the past decade~\cite{duchi2011adaptive, kingma2014adam, zeiler2012adadelta, reddi2019convergence, dozat2016incorporating}. Two of such notable methods are the adaptive gradient-descent method (AdaGrad)~\cite{duchi2011adaptive}, and the adaptive momentum estimation method (Adam)~\cite{kingma2014adam}. Both AdaGrad and Adam methods maintain an estimate of a local minima in~\eqref{eqn:opt_1} and update it iteratively using the gradient of the {\em objective function} multiplied by an adaptive {\em learning rate}.

AdaGrad is a prominent optimization method that achieves significant performance gains compared to SGD. As the name suggests, AdaGrad adaptively updates the {\em learning rate} based on the information of all the previous gradients. Specifically~\cite{duchi2011adaptive}, for each iteration $k \in \{0,1,\ldots\}$, let $x_k = [x_{k,1},\ldots,x_{k,d}]^T$ denote the estimate of a local minima in~\eqref{eqn:opt_1} maintained by AdaGrad. In addition, AdaGrad maintains a set of real valued scalar parameters denoted by $\{b_{k,i}: i=1,\ldots,d\}$. The algorithm is initialized with arbitrarily chosen initial estimate $x_0 \in \R^d$ and $\{b_{k,i} > 0: i=1,\ldots,d\}$. Let the gradient of the {\em objective function} evaluated at $x \in \R^d$ be denoted as $\nabla f(x) \in \R^d$, and its $i$-th element in be denoted as $\nabla_i f(x)$ for each dimension $i \in  \{1,\ldots,d\}$. At each iteration $k \in \{0,1,\ldots\}$, each of the parameters $\{b_{k,i}: i=1,\ldots,d\}$ are updated according to $b_{k+1,i}^2 = b_{k,i}^2 + \norm{\nabla_i f(x_k)}^2$. At the same iteration $k$, the estimate is updated to $x_{k+1,i} = x_{k,i} - \eta \frac{\nabla_i f(x_k)}{b_{k+1,i}}$ for each $i \in  \{1,\ldots,d\}$. The real valued scalar parameter $\eta > 0$ is called the {\em step-size}. Thus, the {\em learning rate} in AdaGrad is adaptively weighted along each dimension by the sum of squares of the past gradients. AdaGrad has been shown to particularly effective for sparse gradients~\cite{NIPS2013_2812e5cf}, but has under-performed for some applications~\cite{wilson2017marginal}.

The Adam algorithm has been observed to compare favorably with other optimization methods for a wide range of optimization problems, including deep learning~\cite{radford2015unsupervised, peters2018deep, wu2016google}. Like AdaGrad, Adam also updates the {\em learning rate} based on the information of past gradients. However, unlike AdaGrad, Adam effectively updates the {\em learning rate} based on only a moving window of the past gradients. Specifically~\cite{kingma2014adam}, Adam maintains two sets of $d$-dimensional vectors, respectively denoted by $\mu_k = [\mu_{k,1},\ldots,\mu_{k,d}]^T$ and $v_k = [v_{k,1},\ldots,v_{k,d}]^T$. $\mu_k$ and $v_k$ are respectively known as the biased first moment estimate and biased second raw moment estimate. These vectors are initialized with $\mu_0 = 0_d$ and $\{v_{k,i} > 0: i=1,\ldots,d\}$. Three parameters $\eta > 0$, $\beta_1 \in [0,1)$, and $\beta_2 \in [0,1)$ are chosen before the iterations begin. At each iteration $k \in \{0,1,\ldots\}$, the vectors $\mu_k$ and $v_k$ are updated according to $\mu_{k+1,i} = \beta_1 \mu_{k,i} + (1-\beta_1) \nabla_i f(x_k)$ and $v_{k+1,i} = \beta_2 v_{k,i} + (1-\beta_2) \norm{\nabla_i f(x_k)}^2$ along each dimension $i \in  \{1,\ldots,d\}$. Next, the estimate $x_k$ is updated to $x_{k+1,i} = x_{k,i} - \eta \frac{\sqrt{1-\beta_2^k}}{1-\beta_1^k} \frac{\mu_{k+1,i}}{\sqrt{v_{k+1,i}}}$ for each $i \in  \{1,\ldots,d\}$. The factor $\frac{\sqrt{1-\beta_2^k}}{1-\beta_1^k}$ is responsible for the initial bias correction, as proposed in the original Adam algorithm~\cite{kingma2014adam}.  Thus, the learning rate in Adam is weighted by the exponentially moving averages of the past gradients.

Several algorithms have been proposed to improve upon the convergence of the Adam method, such as AdaShift~\cite{zhou2018adashift}, Nadam~\cite{dozat2016incorporating}, AdaMax~\cite{kingma2014adam}. Although these algorithms have demonstrated good performance in practice, they do not have theoretical convergence guarantees. While AMSGrad has been shown to perform better than Adam on CIFAR-10 dataset~\cite{reddi2019convergence}, other experiments suggest AMSGrad be similar or worse than Adam. The recently proposed AdaBelief~\cite{zhuang2020adabelief} is another variation of Adam with a theoretical convergence guarantee. Note that the RMSprop method is a special case of Adam with the parameter $\beta_1=0$~\cite{reddi2018adaptive}.

We aim to present simplified proofs of convergence of the AdaGrad and Adam algorithms to a {\em critical point} for non-convex {\em objective functions} in the deterministic settings. The first convergence guarantee of a generalized AdaGrad method for non-convex functions was proved recently in~\cite{li2019convergence}, where the additional parameter $\epsilon \geq 0$ generalizes the AdaGrad method. However, the parameter $\epsilon$ in~\cite{li2019convergence} has been assumed to be strictly positive for the convergence guarantee, which excludes the case of the original AdaGrad method~\cite{duchi2011adaptive} where $\epsilon = 0$. We first propose a more general AdaGrad model, coined G-AdaGrad, that subsumes the work in~\cite{li2019convergence}.  Our model and corresponding convergence proof allow the parameter $\epsilon$ to be negative, as well as the case of the original AdaGrad. Besides, our proof provides intuition behind how this generalization of AdaGrad impacts its convergence. The analysis for AdaGrad in~\cite{defossez2020simple} assumes the gradients to be uniformly bounded. We do not make such an assumption. Other works also analyze the convergence of AdaGrad-like algorithms for non-convex objective functions, notable among them being WNGrad~\cite{wu2018wngrad} and AdaGrad-Norm~\cite{ward2019adagrad}. 
Note that all of the aforementioned analyses of AdaGrad and AdaGrad-like algorithms are in discrete-time. We analyze AdaGrad in the continuous-time domain.

Previous works that demonstrate convergence of the Adam algorithm for non-convex {\em objective functions} include~\cite{reddi2018adaptive, de2018convergence, tong2019calibrating, barakat2020convergence, chen2018convergence, barakat2021convergence}. In~\cite{reddi2018adaptive}, the proof for Adam is provided when the algorithm parameter $\beta_1 = 0$. We consider the general parameter settings where $\beta_1 \geq 0$. An Adam-like algorithm has been proposed and analyzed in~\cite{defossez2020simple}. The proofs in~\cite{reddi2018adaptive, de2018convergence, tong2019calibrating, barakat2020convergence, chen2018convergence} do not consider the initial bias correction steps in the original Adam~\cite{kingma2014adam}. Our analysis of Adam considers the bias correction steps. The analyses in~\cite{defossez2020simple, reddi2018adaptive, de2018convergence, tong2019calibrating, chen2018convergence}
assume uniformly bounded gradients. We do not make such an assumption. The aforementioned analyses of Adam are in discrete-time. A continuous-time version of Adam has been proposed in~\cite{barakat2021convergence}, which includes the bias correction steps. However, compared to the convergence proof in~\cite{barakat2021convergence}, our proof for Adam is simpler. In addition,~\cite{barakat2021convergence} assumes that the parameters $\beta_1$ and $\beta_2$ in the Adam algorithm are functions of the {\em step-size} $\eta$ such that $\beta_1$ and $\beta_2$ tends to one as the {\em step-size} $\eta \to 0$. We do not make such an assumption in our analysis.

\subsection{Summary of Our Contributions}
\label{sub:contri}

\begin{itemize}
    \item In this paper, we first propose a more general AdaGrad algorithm, which we refer to as Generalized AdaGrad (G-AdaGrad). The proposed optimizer improves upon the convergence rate of the original AdaGrad algorithm. The original AdaGrad, discussed in Section~\ref{sec:intro}, is a special case of the proposed G-AdaGrad algorithm.
    \item We propose two state-space models, each for the G-AdaGrad algorithm and the original Adam algorithm, in continuous time-domain. The proposed state-space models are an autonomous and non-autonomous system of ordinary differential equations, respectively, for G-AdaGrad and Adam. The non-autonomy of the model for Adam is due to initial bias correction steps.
    \item Using a simple analysis of the proposed state-space models, we prove the convergence of the G-AdaGrad and the Adam algorithm to a {\em critical point} of the possibly non-convex optimization problem~\eqref{eqn:opt_1} in the deterministic settings. Our analysis requires minimal assumptions about the optimization problem~\eqref{eqn:opt_1}.
\end{itemize}
 
Compared to the existing works that analyze the convergence of the AdaGrad or the Adam algorithm for non-convex {\em objective functions}, the \textbf{major contributions} of our presented analysis are as follows.

\begin{enumerate}
    \item The gradient $\nabla f$ need not be uniformly bounded, unlike~\cite{defossez2020simple, reddi2018adaptive, de2018convergence, tong2019calibrating, chen2018convergence}.
    \item Includes the original AdaGrad algorithm and a more generalized version with intuition behind the generalization, compared to~\cite{li2019convergence}.
    \item Explanation for the choice of exponent of $b_k$ in the estimation update of AdaGrad, unlike~\cite{li2019convergence}.
    \item Includes initial bias correction steps in Adam, unlike~\cite{reddi2018adaptive, de2018convergence, tong2019calibrating, barakat2020convergence, chen2018convergence}.
    \item Analysis in continuous-time domain, unlike~\cite{li2019convergence, defossez2020simple, reddi2018adaptive, de2018convergence, tong2019calibrating, barakat2020convergence, chen2018convergence}.
    \item $\beta_1$ and $\beta_2$ in Adam need not be functions of the step-size $\eta$, unlike~\cite{barakat2021convergence}. 
    \item Simple proof of convergence, compared to the continuous-time version in~\cite{barakat2021convergence}.
\end{enumerate}

\section{Continuous-Time Generalized AdaGrad}
\label{sec:adagrad}

In this section, we propose a set of autonomous ordinary differential equations. Using first-order Euler discretization, we show that the proposed set of differential equations coincides with a general version of the AdaGrad algorithm, which we refer to as the Generalized AdaGrad (G-AdaGrad). The proposed differential equations include the original AdaGrad as a special case.

We make the following assumptions in order to present our algorithms and their convergence results.

\begin{assumption} \label{assump_1}
Assume that the minimum of function $f$ exists and is finite. In other words, $\mnorm{\min_{x \in \R^d} f(x)} < \infty$.
\end{assumption}

\begin{assumption} \label{assump_2}
Assume that $f$ is twice differentiable over its domain $\R^d$ and the entries in the Hessian matrix $\nabla^2 f(x)$ are bounded above for all $x\in \R^d$.
\end{assumption}

The above assumptions about the {\em objective function} $f$ are mild and standard in the literature of gradient-based optimization. Assumption~\ref{assump_2} is equivalent to the gradient $\nabla f$ being Lipschitz continuous, which is often referred as the function $f$ being {\em smooth}~\cite{li2019convergence, reddi2018adaptive,de2018convergence}.

\subsection{Description of Generalized AdaGrad}
\label{sub:algo_adagrad}

We propose the Generalized AdaGrad (G-AdaGrad) method which is parameterized by a positive real scalar $\alpha$.
For each dimension $i \in \{1,\ldots,d\}$ and $t \geq 0$, consider the following pair of differential equations
\begin{align}
    \Dot{x}_{ci}(t) & = \norm{\nabla_i f(x(t))}^2, \label{eqn:xc_evol} \\
    \Dot{x}_i(t) & = -\dfrac{\nabla_i f(x(t))}{\left(x_{ci}(t)\right)^\alpha},  \label{eqn:x_evol}
\end{align}
with initial conditions $x_c(0) \in \R^d$ and $x(0) \in \R^d$. We assume that the initial condition $\{x_{ci}(0) > 0 : i = 1,\ldots,d\}$.  The variable ${x}_{ci}, \forall i$ can be abstracted as dynamic controller state. 

The above pair of differential equations~\eqref{eqn:xc_evol}-\eqref{eqn:x_evol} can be seen as a continuous-time variation of the following algorithm, when~\eqref{eqn:xc_evol}-\eqref{eqn:x_evol} are discretized with a fixed sampling time $\delta > 0$. For each $i \in \{1,\ldots,d\}$ and $k \in \{0,1,\ldots\}$,
\begin{align}
    x_{ci}((k+1)\delta) & = x_{ci}(k\delta) + \delta \norm{\nabla_i f(x(k\delta))}^2, \label{eqn:xc_dscrt} \\
    x_{i}((k+1)\delta) & = x_{i}(k\delta) - \delta \dfrac{\nabla_i f(x(k\delta))}{\left(x_{ci}(k\delta)\right)^\alpha}.  \label{eqn:x_dscrt}
\end{align}

This fact can be seen from the following argument.
From Taylor series expansion of $x_{ci}((k+1)\delta)$ and $x_{i}((k+1)\delta)$ we obtain that,
\begin{align*}
    x_{ci}((k+1)\delta) & = x_{ci}(k\delta) + \delta \Dot{x}_{ci}(k\delta) + \bigO(\delta^2), \\
    x_{i}((k+1)\delta) & = x_{i}(k\delta) + \delta \Dot{x}_{i}(k\delta) + \bigO(\delta^2).
\end{align*}
Upon substituting from above,~\eqref{eqn:xc_dscrt}-\eqref{eqn:x_dscrt} can be rewritten as
\begin{align*}
    \delta \Dot{x}_{ci}(k\delta) + \bigO(\delta^2) & = \delta \norm{\nabla_i f(x(k\delta))}^2, \, i \in \{1,\ldots,d\}, \\
    \delta \Dot{x}_{i}(k\delta) + \bigO(\delta^2) & = - \delta \dfrac{\nabla_i f(x(k\delta))}{\left(x_{ci}(k\delta)\right)^\alpha}, \, i \in \{1,\ldots,d\}.
\end{align*}
Defining $t=k\delta$, in the limit $\delta \to 0$, the above equations coincide with~\eqref{eqn:xc_evol}-\eqref{eqn:x_evol}. 

Note that,~\eqref{eqn:xc_dscrt}-\eqref{eqn:x_dscrt} represents a generalization of the AdaGrad algorithm discussed in Section~\ref{sec:intro} with {\em step-size} $\eta = \delta$ and an additional parameter $\alpha$. The controller states $x_c(t)$ in continuous-time corresponds to the variable $b_k$ in discrete-time of the AdaGrad algorithm. When we set $\alpha=0.5$,~\eqref{eqn:xc_dscrt}-\eqref{eqn:x_dscrt} correspond to the original AdaGrad algorithm. Introducing the parameter $\alpha$ can further improve its convergence. This is discussed in the following subsection.

\subsection{Convergence of Generalized AdaGrad}
\label{sub:conv_adagrad}

Define the set of {\em critical points} of the {\em objective function} $f$ as
\begin{align}
    X^* = \{x\in \R^d : \nabla f(x) = 0_d\}. \label{eqn:xstar}
\end{align}
Theorem~\ref{thm:adagrad} below presents a key result on the convergence of the G-AdaGrad algorithm~\eqref{eqn:xc_evol}-\eqref{eqn:x_evol} in continuous-time to a {\em critical point} in $X^*$.

\begin{theorem} \label{thm:adagrad}
Consider the pair of differential equations~\eqref{eqn:xc_evol}-\eqref{eqn:x_evol} with initial conditions $x_c(0) \in \R^d$ and $x(0) \in \R^d$ such that $\{x_{ci}(0) > 0 : i = 1,\ldots,d\}$. Let the parameter $\alpha \in (0,1)$. If Assumptions~\ref{assump_1}-\ref{assump_2} hold, then
\begin{align}
    \lim_{t \to \infty} \nabla f(x(t)) = 0_d. \label{eqn:zero_grad_1}
\end{align}
Moreover, for all $t \geq 0$, we have
\begin{align}
   f(x(t)) = f(x(0)) + \sum_{i=1}^d \dfrac{\left(x_{ci}(0)\right)^{1-\alpha} - \left(x_{ci}(0) + \int_0^t \norm{\nabla_i f(x(s))}^2 ds\right)^{1-\alpha}}{1-\alpha}. \label{eqn:ft}
\end{align}
\end{theorem}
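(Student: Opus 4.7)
The plan is to establish the explicit identity~\eqref{eqn:ft} first and then use it as the engine driving the convergence claim~\eqref{eqn:zero_grad_1}. To prove~\eqref{eqn:ft}, I would differentiate $f$ along the trajectory by the chain rule, substitute~\eqref{eqn:x_evol} to get $\frac{d}{dt}f(x(t)) = -\sum_{i=1}^d \norm{\nabla_i f(x(t))}^2/(x_{ci}(t))^\alpha$, and then apply~\eqref{eqn:xc_evol} to recognise this as $-\sum_{i=1}^d \dot{x}_{ci}(t)/(x_{ci}(t))^\alpha = -\frac{d}{dt}\sum_{i=1}^d (x_{ci}(t))^{1-\alpha}/(1-\alpha)$. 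The anti-derivative step is valid because $x_{ci}(0)>0$ together with $\dot{x}_{ci}\geq 0$ keeps $x_{ci}(t)$ strictly positive, and because $\alpha\neq 1$. Integrating from $0$ to $t$ and using $x_{ci}(t) = x_{ci}(0) + \int_0^t \norm{\nabla_i f(x(s))}^2\,ds$ (which is~\eqref{eqn:xc_evol} in integrated form) yields~\eqref{eqn:ft}.

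Next, for~\eqref{eqn:zero_grad_1}, I would observe that~\eqref{eqn:ft} combined with Assumption~\ref{assump_1} provides a uniform upper bound on each $x_{ci}(t)$: the left-hand side of~\eqref{eqn:ft} is bounded below by $\min_x f(x)$, so the subtracted sum $\sum_i (x_{ci}(t))^{1-\alpha}/(1-\alpha)$ is bounded above by a constant depending only on $f(x(0))$, $\min_x f(x)$, and the initial values $\{x_{ci}(0)\}$. Since $1-\alpha>0$, each $x_{ci}(t)$ itself is bounded, so by~\eqref{eqn:xc_evol} the integral $\int_0^\infty \norm{\nabla_i f(x(s))}^2\,ds$ is finite for every $i$, and hence $\int_0^\infty \norm{\nabla f(x(s))}^2\,ds<\infty$.

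To upgrade this integrability into the pointwise limit~\eqref{eqn:zero_grad_1}, the plan is to invoke Barbalat's lemma applied to $g(t):=\norm{\nabla f(x(t))}^2$. It suffices to show $g$ is uniformly continuous, which I would do by bounding $g'(t)=2\nabla f(x(t))^T \nabla^2 f(x(t))\,\dot{x}(t)$ uniformly in $t$. Assumption~\ref{assump_2} bounds the Hessian, and the standard descent-lemma consequence $\norm{\nabla f(x)}^2 \leq 2L(f(x)-\min_x f)$ together with the upper bound on $f(x(t))$ (readable directly from~\eqref{eqn:ft}, since the subtracted term is non-negative) bounds $\norm{\nabla f(x(t))}$; the strict positivity $x_{ci}(t)\geq x_{ci}(0)>0$ then bounds $\dot{x}(t)$ via~\eqref{eqn:x_evol}. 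The main obstacle is precisely this third step: marshalling three uniform bounds — on $f(x(t))$, $\norm{\nabla f(x(t))}$, and $\norm{\dot{x}(t)}$ — without assuming globally bounded gradients a priori, which is where the stated advantage over~\cite{defossez2020simple, reddi2018adaptive} lies. The key enabler throughout is the finiteness already extracted from~\eqref{eqn:ft}.
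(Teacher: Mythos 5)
Your proposal follows essentially the same route as the paper: derive the identity by writing $\dot f(x(t)) = -\sum_i \dot x_{ci}(t)/(x_{ci}(t))^\alpha$ and integrating, use Assumption~\ref{assump_1} to bound $x_{ci}(t)$ and hence obtain square-integrability of $\norm{\nabla f(x(t))}$, then conclude via Barbalat's lemma after showing $\norm{\nabla f(x(t))}^2$ is uniformly continuous through bounds on $\nabla f(x(t))$, $\nabla^2 f(x(t))$, and $\dot x(t)$. The only minor deviation is your justification of the gradient bound through the smoothness inequality $\norm{\nabla f(x)}^2 \le 2L\left(f(x)-\min_x f(x)\right)$ combined with $f(x(t)) \le f(x(0))$, which is, if anything, a cleaner step than the paper's terse assertion that square-integrability yields boundedness of the gradient.
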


\begin{proof}
The time-derivative of $f$ along the trajectories $x(t)$  of~\eqref{eqn:x_evol} is given by
\begin{align*}
    \Dot{f}(x(t)) & = (\nabla f(x(t))^T \Dot{x}(t) = \sum_{i=1}^d \nabla_i f(x(t)) \Dot{x}_{i}(t).
\end{align*}
Substituting~\eqref{eqn:x_evol} yields,
\begin{align*}
    \Dot{f}(x(t)) & =  - \sum_{i=1}^d \dfrac{\norm{\nabla_i f(x(t))}^2}{\left(x_{ci}(t)\right)^\alpha}.
\end{align*}
Further utilizing~\eqref{eqn:xc_evol} we get,
\begin{align}
    \Dot{f}(x(t)) & =  - \sum_{i=1}^d \dfrac{\Dot{x}_{ci}(t)}{\left(x_{ci}(t)\right)^\alpha}. \label{eqn:fdot}
\end{align}
Integrating both sides above with respect to (w.r.t) $t$ from $0$ to $t$, we get
\begin{align}
   & f(x(t))-f(x(0)) = - \sum_{i=1}^d \int_0^t \dfrac{\Dot{x}_{ci}(s)}{\left(x_{ci}(s)\right)^\alpha} ds. \label{eqn:f_int}
\end{align}
Since $\alpha < 1$, upon evaluating the integral we have
\begin{align}
    f(x(t)) = f(x(0)) + \sum_{i=1}^d \dfrac{\left(x_{ci}(0)\right)^{1-\alpha} - \left(x_{ci}(t)\right)^{1-\alpha}}{1-\alpha}. \label{eqn:f_eval}
\end{align}
Integrating both sides of~\eqref{eqn:xc_evol} w.r.t $t$ from $0$ to $t$, we have
\begin{align*}
    x_{ci}(t) = x_{ci}(0) + \int_0^t \norm{\nabla_i f(x(s))}^2 ds, ~ i \in \{1,\ldots,d\}.
\end{align*}
Using the above equation in~\eqref{eqn:f_eval} proves~\eqref{eqn:ft}.

Since $x_{ci}(0) > 0$, we have $x_{ci}(t) > 0$.
The above equation implies that $x_{ci}(t)$ is non-decreasing w.r.t $t$, which combined with~\eqref{eqn:ft} and $\alpha\in (0,1)$ implies that $f(x(t))$ in non-increasing w.r.t. $t$. From Assumption~\ref{assump_1}, $f$ is bounded below. Thus, $\lim_{t \to \infty} f(x(t))$ is finite. From~\eqref{eqn:f_eval} then it follows that, $\lim_{t \to \infty} x_{c}(t)$ is finite. Thus, the above equation implies that $\norm{\nabla f(x(t))}$ is square-integrable w.r.t $t$. Hence, $\nabla f(x(t))$ is bounded above. 

Since $\nabla f(x(t))$ is bounded and $x_{ci}(t) > 0$,  from~\eqref{eqn:x_evol} we have that $\Dot{x}(t)$ is bounded above. Now, the time-derivative of $\norm{\nabla f}^2$ along the trajectories $x(t)$ is given by
\begin{align*}
    \dfrac{d}{dt} \norm{\nabla f(x(t))}^2 = 2\nabla f(x(t))^T \nabla^2 f(x(t)) \Dot{x}(t).
\end{align*}
We have shown that $\nabla f(x(t))$ and $\Dot{x}(t)$ are bounded above. From Assumption~\ref{assump_2}, we have all the entries in $\nabla^2 f(x(t))$ bounded above. Then, from the above equation we have $\dfrac{d}{dt} \norm{\nabla f(x(t))}^2$ bounded above. Thus, $\norm{\nabla f(x(t))}^2$ is uniformly continuous.

We have shown that, $\norm{\nabla f(x(t))}$ is square-integrable and $\norm{\nabla f(x(t))}^2$ is uniformly continuous. From Barbalat's lemma~\cite{barbalat1959systemes} it follows that $\lim_{t \to \infty} \norm{\nabla f(x(t))}^2 = 0$. This proves~\eqref{eqn:zero_grad_1}.

\end{proof}

Theorem~\ref{thm:adagrad} implies that the G-AdaGrad algorithm, proposed in~\eqref{eqn:xc_evol}-\eqref{eqn:x_evol}, converges to a {\em critical point} in $X^*$ of the non-convex optimization problem~\eqref{eqn:opt_1}. Furthermore,~\eqref{eqn:ft} implies that the convergence of G-AdaGrad is affected by the algorithm parameter $\alpha$. As we will show through simulations in Section~\ref{sec:exp}, $\alpha=0.5$, which corresponds to the original AdaGrad method~\cite{duchi2011adaptive}, is not the optimal value of $\alpha$. 

Another significance of the above proof is that, it explains why the exponent $\alpha$ of $x_c(t)$ (equivalently, $b_k$ in discrete-time) in the update equation of the estimate $x(t)$ is limited to $\alpha < 1$. If $\alpha > 1$,~\eqref{eqn:ft} implies that $f(x(t))$ will be increasing in $t$. If $\alpha = 1$, evaluating the integral in~\eqref{eqn:f_int} we have
\begin{align*}
   & f(x(t)) = f(x(0)) + \sum_{i=1}^d \log \left(\frac{x_{ci}(0)}{x_{ci}(t)}\right).
\end{align*}
Thus, $f$ decreases at a slower rate for $\alpha = 1$ compared to $\alpha < 1$, because of logarithmic decrements in case of $\alpha =1$ compared to exponential decrements.

Note that, the parameter $\epsilon$ in~\cite{li2019convergence} plays the same role as $\alpha$. However, the convergence results in~\cite{li2019convergence} is only for $\epsilon \in (0,0.5]$ which corresponds to $\alpha \in (0.5,1]$. Thus, our analysis is more general compared to~\cite{li2019convergence}. In addition, our analysis in Theorem~\ref{thm:adagrad} explains the significance of the parameter $\alpha$, as discussed in the previous paragraph.

\section{Continuous-Time Adam}
\label{sec:adam}

In this section, we propose a set of non-autonomous ordinary differential equations. Using first-order Euler discretization, the proposed set of differential equations coincides with the original Adam algorithm. 

\subsection{Description of Adam}
\label{sub:algo_adam}

In order to present our algorithm, define two real valued scalar parameters $\lambda_1 \in (0,1)$ and $\lambda_2 \in (0,1)$.
Define a function $\alpha: [1,\infty) \to \R$ as
\begin{align}
    \alpha(t) &= \dfrac{1-(1-\lambda_1)^t}{\sqrt{1-(1-\lambda_2)^t}}, \, t\geq 1. \label{eqn:alpha}
\end{align}
For each $i \in \{1,\ldots,d\}$ and $t \geq 1$, consider the following sets of differential equations
\begin{align}
    \Dot{\mu}_i(t) & = -\lambda_1 \mu_i(t) + \lambda_1 \nabla_i f(x(t)), \label{eqn:mu_evol} \\
    \Dot{v}_i(t) & = -\lambda_2 v_i(t) + \lambda_2 \norm{\nabla_i f(x(t))}^2, \label{eqn:v_evol} \\
    \Dot{x}_i(t) & = - \dfrac{1}{\alpha(t)} \dfrac{\mu_i(t)}{\sqrt{v_i(t)}}, \label{eqn:xm_evol}
\end{align}
with initial conditions $\mu(1) \in \R^d$, $v(1) \in \R^d$, and $x(1) \in \R^d$. We assume that the initial condition $\{v_i(1) > 0 : i = 1,\ldots,d\}$. The variables $\mu$ and $v$ can be abstracted as dynamic controller states. 

Following the argument in Section~\ref{sub:algo_adagrad}, the above pair of differential equations~\eqref{eqn:mu_evol}-\eqref{eqn:xm_evol} can be seen as a continuous-time variation of the following algorithm, when~\eqref{eqn:mu_evol}-\eqref{eqn:xm_evol} are discretized with a sampling time $\delta > 0$. For each $i \in \{1,\ldots,d\}$ and $k \in \{1,2,\ldots\}$,
\begin{align}
    \mu_i((k+1)\delta) & = (1-\delta \lambda_1) \mu_i(k\delta) + \delta \lambda_1 \nabla_i f(x(k\delta)), \label{eqn:mu_dscrt}\\
    v_i((k+1)\delta) & = (1-\delta \lambda_2) \mu_i(k\delta) + \delta \lambda_2 \norm{\nabla_i f(x(k\delta))}^2, \label{eqn:v_dscrt}\\
    x_{i}((k+1)\delta) & = x_{i}(k\delta) - \dfrac{\delta}{\alpha(k\delta)} \dfrac{\mu_i(k\delta)}{\sqrt{v_i(k\delta)}}.  \label{eqn:xm_dscrt}
\end{align}

Note that,~\eqref{eqn:mu_dscrt}-\eqref{eqn:xm_dscrt} represents the Adam algorithm discussed in Section~\ref{sec:intro}, with the parameters $\beta_1 = 1-\delta\lambda_1$, $\beta_2 = 1-\delta\lambda_2$ and {\em step-size} $\eta = \delta$. The term $\alpha(t)$ in~\eqref{eqn:xm_evol} of captures the initial bias corrections in Adam. This term renders the system of our differential equations~\eqref{eqn:mu_evol}-\eqref{eqn:xm_evol} as non-autonomous.

In the next subsection, we present the convergence of our proposed state-space model in~\eqref{eqn:mu_evol}-\eqref{eqn:xm_evol}.

\subsection{Convergence of Adam}
\label{sub:conv_adam}

Recall the definition of the set of {\em critical points} $X^*$ from~\eqref{eqn:xstar} in Section~\ref{sub:conv_adagrad}.
Theorem~\ref{thm:adam} below proves the convergence of the Adam algorithm~\eqref{eqn:mu_evol}-\eqref{eqn:xm_evol} in continuous-time domain to a {\em critical point} in $X^*$.

\begin{theorem} \label{thm:adam}
Consider the set of differential equations~\eqref{eqn:mu_evol}-\eqref{eqn:xm_evol} with initial conditions $\mu(1) = 0_d$, $v(1) \in \R^d$ and $x(1) \in \R^d$ such that $\{v_i(1) > 0 : i = 1,\ldots,d\}$. Let the parameters $\lambda_1$ and $\lambda_2$ satisfy
\begin{align}
    0 < \lambda_2 < \lambda_1 < 1. \label{eqn:lambda_cond}
\end{align}
If Assumptions~\ref{assump_1}-\ref{assump_2} hold, then $\lim_{t \to \infty} \nabla f(x(t)) = 0_d$.
\end{theorem}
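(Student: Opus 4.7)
The plan is to follow the same overall structure as the proof of Theorem~\ref{thm:adagrad}: differentiate $f$ along the trajectories, manipulate the resulting identity into an integrability statement for $\nabla f$, establish uniform continuity of $\norm{\nabla f(x(t))}^2$ through the bounded-Hessian hypothesis, and close with Barbalat's lemma. The essential new obstacle is that $\dot{x}$ is aligned with the filtered quantity $-\mu/(\alpha\sqrt{v})$ rather than with $-\nabla f$, so the chain-rule identity
\begin{equation*}
    \dot{f}(x(t)) = -\frac{1}{\alpha(t)} \sum_{i=1}^{d} \frac{\nabla_i f(x(t))\, \mu_i(t)}{\sqrt{v_i(t)}}
\end{equation*}
is not sign-definite; handling this mismatch is the crux of the argument.

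My first step would be to assemble structural bounds on $\mu$, $v$, and $\alpha$. Integrating~\eqref{eqn:mu_evol} with $\mu(1) = 0$ and~\eqref{eqn:v_evol} via integrating factors yields $\mu_i(t) = \lambda_1 \int_1^t e^{-\lambda_1(t-s)} \nabla_i f(x(s))\, ds$ and $v_i(t) = v_i(1) e^{-\lambda_2(t-1)} + \lambda_2 \int_1^t e^{-\lambda_2(t-s)} \norm{\nabla_i f(x(s))}^2\, ds$. Applying Cauchy--Schwarz to the representation of $\mu_i$ and invoking the hypothesis $\lambda_2 < \lambda_1$, which forces $e^{-\lambda_1(t-s)} \le e^{-\lambda_2(t-s)}$ for $s \le t$, produces the pointwise bound $\mu_i(t)^2 \le (\lambda_1/\lambda_2)\, v_i(t)$, so that $|\mu_i|/\sqrt{v_i}$ is uniformly bounded. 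A direct inspection of $\alpha(t)$ shows that on $[1,\infty)$ its numerator lies in $[\lambda_1,1]$ and its denominator in $[\sqrt{\lambda_2},1]$, so $\alpha(t)$ is bounded above and below by positive constants. Together these imply that $\dot{x}(t)$ is uniformly bounded, which will later feed uniform continuity of $\norm{\nabla f(x(t))}^2$ through Assumption~\ref{assump_2}.

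The harder step is to extract the integrability estimate for $\nabla f$. I would substitute $\nabla_i f = \mu_i + \dot{\mu}_i/\lambda_1$ from~\eqref{eqn:mu_evol} into the chain-rule identity, which decomposes $\dot{f}$ into a manifestly non-positive main piece $-\alpha(t)^{-1} \sum_i \mu_i^2/\sqrt{v_i}$ and an exact-differential remainder $-(2\lambda_1\alpha(t))^{-1} \sum_i (d\mu_i^2/dt)/\sqrt{v_i}$. Integrating from $1$ to $T$ and applying integration by parts to the remainder, the boundary contribution at $t=1$ vanishes thanks to $\mu(1) = 0$, while the boundary contribution at $t=T$ and the residual integrals involving $d/dt\bigl(1/(\alpha(t)\sqrt{v_i(t)})\bigr)$ are to be controlled using the structural bounds from the previous paragraph together with Young's inequality, absorbing them into a fraction of the main negative term up to a bounded constant. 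Assumption~\ref{assump_1} then caps $f$ from below and yields a finite bound on $\int_1^\infty \sum_i \mu_i^2/(\alpha(t)\sqrt{v_i})\, dt$, which after re-expressing $\nabla_i f$ via $\mu_i$ and $\dot{\mu}_i$ translates into square-integrability of $\nabla f(x(t))$. Uniform continuity of $\norm{\nabla f(x(t))}^2$ then follows exactly as in Theorem~\ref{thm:adagrad} from the uniform bound on $\dot{x}$ and the bounded Hessian, and Barbalat's lemma concludes. I expect the integration-by-parts bookkeeping in this final step to be the main technical difficulty; the hypothesis $\lambda_2 < \lambda_1$ is precisely what keeps the error terms subordinate to the main negative piece.
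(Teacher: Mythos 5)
Your opening two steps are sound, and in one respect cleaner than the paper's: the pointwise bound $\mu_i(t)^2 \le (\lambda_1/\lambda_2)\, v_i(t)$, obtained from Cauchy--Schwarz on the integral representation of $\mu_i$ together with $e^{-\lambda_1(t-s)}\le e^{-\lambda_2(t-s)}$, is correct under \eqref{eqn:lambda_cond} and yields boundedness of $\dot{x}$ immediately, without first having to bound $\nabla f$, $\mu$ and $v$ as the paper does. Likewise, keeping the factor $1/\alpha(t)$ inside the integral and absorbing the remainder terms (coefficient $\lambda_2/(4\lambda_1)<1/4$, plus a term in $\dot{\alpha}/\alpha$ that decays) is a workable alternative to the paper's route, which instead multiplies by $\alpha(t)$, integrates $\int\alpha\dot f$ by parts and runs the sign analysis of $\dot\alpha$ in \eqref{eqn:cond} to reach \eqref{eqn:alphaV}. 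Up to this point your plan parallels \eqref{eqn:int_expand}--\eqref{eqn:alphaV} and can be made rigorous.

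The genuine gap is the final sentence of your integrability step: "after re-expressing $\nabla_i f$ via $\mu_i$ and $\dot\mu_i$ translates into square-integrability of $\nabla f(x(t))$." What your energy estimate actually delivers is finiteness of $\int_1^\infty \mu_i(t)^2 v_i(t)^{-1/2}\,dt$ and of $\int_1^\infty \mu_i(t)^2 v_i(t)^{-3/2}\norm{\nabla_i f(x(t))}^2 dt$, i.e.\ integrability of filtered, $v$-normalized quantities, and there is no direct passage from these to $\int \norm{\nabla_i f(x(t))}^2 dt<\infty$. The obvious substitution $\nabla_i f=\mu_i+\dot\mu_i/\lambda_1$ fails: squaring produces $\int \dot\mu_i^2/\sqrt{v_i}$, which nothing you have established controls, and the cross term $\int \mu_i\dot\mu_i/\sqrt{v_i}$ is exactly the exact-differential piece you already integrated by parts, which only returns bounded boundary contributions rather than decay of $\dot\mu_i$. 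Small $\mu_i^2/\sqrt{v_i}$ is a priori compatible with a large gradient (e.g.\ $\mu_i$ near a sign change while $v_i$ is large or growing, and you have not excluded $v_i\to\infty$ since $\nabla f(x(t))$ is not yet known to be bounded). This transfer is precisely where the paper expends most of its effort: it argues boundedness of $\nabla_i f$, $\mu_i$, $v_i$, shows $\mu_i$ can vanish only at isolated points (otherwise $\nabla_i f\equiv 0$ on an interval and the theorem holds trivially), deduces integrability of $v_i^{3/2}/\mu_i^2$, and only then applies Cauchy--Schwarz to the pair $\mu_i v_i^{-0.75}\norm{\nabla_i f}$ and its reciprocal to conclude $\norm{\nabla_i f}$ is integrable, hence (being bounded) square-integrable, before invoking Barbalat's lemma as in Theorem~\ref{thm:adagrad}. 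Your proposal needs an argument of comparable substance at this juncture --- for instance exploiting that bounded $\dot x$ and Assumption~\ref{assump_2} make $t\mapsto\nabla_i f(x(t))$ Lipschitz so that $\mu_i$ tracks $\nabla_i f$ up to bounded error, together with a separate argument ruling out unbounded $v_i$ --- but as written the decisive step is asserted, not proved, and the difficulty you flag (integration-by-parts bookkeeping) is not where the real obstruction lies.
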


\begin{proof}

The time-derivative of $f$ along the trajectories $x(t)$  of~\eqref{eqn:xm_evol} is given by
\begin{align*}
    \Dot{f}(x(t)) &= \sum_{i=1}^d \nabla_i f(x(t)) \Dot{x}_{i}(t) = - \sum_{i=1}^d \dfrac{\nabla_i f(x(t))}{\alpha(t)} \dfrac{\mu_i(t)}{\sqrt{v_i(t)}}.
\end{align*}
Multiplying with $\alpha(t)$ on both sides above we get
\begin{align*}
    \alpha(t)\Dot{f}(x(t)) = - \sum_{i=1}^d \nabla_i f(x(t)) \dfrac{\mu_i(t)}{\sqrt{v_i(t)}}.
\end{align*}
Upon integrating both sides above w.r.t. $t$ from $1$ to $t$ and substituting from~\eqref{eqn:mu_evol} we have
\begin{align}
    & \int_1^t \alpha(s)\Dot{f}(x(s))ds = - \sum_{i=1}^d \int_1^t \dfrac{(\Dot{\mu}_i(s) + \lambda_1 \mu_i(s)) \mu_i(s)}{\lambda_1 \sqrt{v_i(s)}}ds \nonumber\\
    &= - \sum_{i=1}^d \int_1^t \dfrac{\mu_i(s)\Dot{\mu}_i(s)}{\lambda_1 \sqrt{v_i(s)}}ds - \sum_{i=1}^d \int_1^t \dfrac{\mu_i(s)^2}{\sqrt{v_i(s)}}ds. \label{eqn:total_int}
\end{align}
Integrating by parts we have the first term on R.H.S. as
\begin{align*}
    & \int_1^t \dfrac{\mu_i(s)\Dot{\mu}_i(s)}{\sqrt{v_i(s)}}ds
    = \left[\dfrac{\mu_i(s)^2}{2\sqrt{v_i(s)}}\right]_1^t + \dfrac{1}{4} \int_1^t \mu_i(s)^2 v_i(s)^{-1.5} \Dot{v}_i(s) ds.
\end{align*}
Upon substituting above from~\eqref{eqn:v_evol}, and using that $\mu(1)=0_d$ we have
\begin{align*}
    \int_1^t \dfrac{\mu_i(s)\Dot{\mu}_i(s)}{\sqrt{v_i(s)}}ds
    &= \dfrac{\mu_i(t)^2}{2\sqrt{v_i(t)}} - \dfrac{\lambda_2}{4}\int_1^t \mu_i(s)^2 v_i(s)^{-0.5} ds + \dfrac{\lambda_2}{4}\int_1^t \mu_i(s)^2 v_i(s)^{-1.5} \norm{\nabla_i f(x(s))}^2 ds.
\end{align*}
Upon substituting above in~\eqref{eqn:total_int} we obtain that
\begin{align}
    \int_1^t \alpha(s)\Dot{f}(x(s))ds
    = & - \sum_{i=1}^d \dfrac{\mu_i(t)^2}{2\lambda_1\sqrt{v_i(t)}} - \sum_{i=1}^d \left(1-\dfrac{\lambda_2}{4\lambda_1}\right)\int_1^t \mu_i(s)^2 v_i(s)^{-0.5} ds \nonumber \\
    & - \sum_{i=1}^d \dfrac{\lambda_2}{4\lambda_1}\int_1^t \mu_i(s)^2 v_i(s)^{-1.5} \norm{\nabla_i f(x(s))}^2 ds. \label{eqn:int_expand}
\end{align}

We define, $\gamma_1 = 1-\lambda_1$ and $\gamma_2 = 1-\lambda_2$.
Upon differentiating both sides of~\eqref{eqn:alpha} w.r.t $t$ we get $\Dot{\alpha}(t) = \dfrac{\gamma_2^t(1-\gamma_1^t)\log{\gamma_2} - 2\gamma_1^t(1-\gamma_2^t)\log{\gamma_1}}{2(1-\gamma_2^t)^{1.5}}$.
So we have
\begin{align}
    \Dot{\alpha}(t) < 0 \iff \left(\dfrac{\gamma_2}{\gamma_1}\right)^t \dfrac{1-\gamma_1^t}{1-\gamma_2^t} > 2\dfrac{\log{\gamma_1}}{\log{\gamma_2}}. \label{eqn:cond}
\end{align}
From the condition $\lambda_1 > \lambda_2$ in~\eqref{eqn:lambda_cond}, we have $1 > \gamma_2 > \gamma_1 > 0$. Then, $\left(\dfrac{\gamma_2}{\gamma_1}\right)^t$ and $\dfrac{1-\gamma_1^t}{1-\gamma_2^t}$ are, respectively, increasing and decreasing functions of $t$. Since $1 > \gamma_2 > \gamma_1 > 0$, we have $\lim_{t \to \infty} \left(\dfrac{\gamma_2}{\gamma_1}\right)^t \to \infty$ and $\lim_{t \to \infty} \dfrac{1-\gamma_1^t}{1-\gamma_2^t} = 1$. Thus, $\left(\dfrac{\gamma_2}{\gamma_1}\right)^t \dfrac{1-\gamma_1^t}{1-\gamma_2^t}$ is an increasing function of $t$. Then, there exists $T < \infty$ such that~\eqref{eqn:cond} holds for all $t \geq T$. 
Integrating by parts we rewrite the L.H.S. in~\eqref{eqn:int_expand} as
\begin{align*}
    & \int_1^t \alpha(s)\Dot{f}(x(s))ds = \left[\alpha(s)f(x(s))\right]_1^t - \int_1^t \Dot{\alpha}(s)f(x(s))ds.
\end{align*}
Upon substituting from above in~\eqref{eqn:int_expand}, for $t \geq T$,
\begin{align}
    & \alpha(t)f(x(t)) + \sum_{i=1}^d \dfrac{\mu_i(t)^2}{2\lambda_1\sqrt{v_i(t)}} = \alpha(1)f(x(1)) + \int_1^{T} \Dot{\alpha}(s)f(x(s))ds + \int_{T}^t \Dot{\alpha}(s)f(x(s))ds \nonumber \\
    & - \sum_{i=1}^d \left(1-\dfrac{\lambda_2}{4\lambda_1}\right)\int_1^t \mu_i(s)^2 v_i(s)^{-0.5} ds - \sum_{i=1}^d \dfrac{\lambda_2}{4\lambda_1}\int_1^t \mu_i(s)^2 v_i(s)^{-1.5} \norm{\nabla_i f(x(s))}^2 ds. \label{eqn:alphaV}
\end{align}
Due to~\eqref{eqn:lambda_cond} and~\eqref{eqn:cond}, the R.H.S. in~\eqref{eqn:alphaV} is decreasing in $t \geq T$. Then, the L.H.S. in~\eqref{eqn:alphaV} is also decreasing in $t \geq T$. From~\eqref{eqn:v_evol} and $v_i(1) > 0$, $v_i(t) > 0$. Since $\mu_i(t)$ and $v_i(t)$ are continuous and $v_i(t) > 0$, $\dfrac{\mu_i(t)^2}{2\lambda_1\sqrt{v_i(t)}}$ is continuous. Also, $\alpha(t)f(x(t))$ is continuous. Thus, considering the compact interval $[1,T]$, $\alpha(T)f(x(T)) + \sum_{i=1}^d \dfrac{\mu_i(T)^2}{2\lambda_1\sqrt{v_i(T)}} =: M_T$ is finite. Since the L.H.S. in~\eqref{eqn:alphaV} is decreasing in $t \geq T$, we have the L.H.S. in~\eqref{eqn:alphaV} bounded above by $M_T$ for all $t \geq T$.

From~\eqref{eqn:alphaV} then we have that $\mu_i(t)^2 v_i(t)^{-1.5} \norm{\nabla_i f(x(t))}^2$ and $\mu_i(t)^2 v_i(t)^{-0.5}$ are integrable w.r.t. $t$ and bounded above. It implies that, $\norm{\nabla_i f(x(t))}$ is bounded unless $\mu_i(t) = 0$ or $v_i(t) = \infty$. From~\eqref{eqn:xm_evol}, either of the conditions $\mu_i(t) = 0$ and $v_i(t) = \infty$ implies that $\dot{x}_i(t) = 0$ and, hence, $\frac{d}{dt}\nabla_i f(x(t)) = 0$. Due to continuity of $\nabla_i f$ and $\norm{\nabla f(x(1))} < \infty$, we then have $\norm{\nabla_i f(x(t))}$ is bounded above for all $t$.
Integrating both sides of~\eqref{eqn:mu_evol} and~\eqref{eqn:v_evol} w.r.t $t$ from $1$ to $t$, we have for $i \in \{1,\ldots,d\}$,
\begin{align*}
    \mu_i(t) & = \lambda_1 \int_1^t e^{-\lambda_1(t-s)} \nabla_i f(x(s)) ds, \\
    v_i(t) & = \lambda_2 \int_1^t e^{-\lambda_2(t-s)} \norm{\nabla_i f(x(s))}^2 ds + v_i(1)e^{-\lambda_2t}.
\end{align*}
Since $\norm{\nabla f(x(t))}$ is bounded above and $\lambda_1,\lambda_2 > 0$, the above equations implies that $\mu_i(t)$ and $v_i(t)$ are bounded above. Moreover, $v_i(t) > 0$ as $v_i(1) > 0$. From~\eqref{eqn:xm_evol} then we have, $\Dot{x}(t)$ is bounded above. From~\eqref{eqn:mu_evol} and~\eqref{eqn:xm_evol}, $\mu_i(t) = 0$ implies that $\dot{\mu}_i(t) = \lambda_1 \nabla_i f(x(t))$ and $\dot{x}_i(t) = 0$. Thus, $\mu_i(t)$ can be zero only at isolated points $t$. Otherwise, for some $h>0$ there exists an interval $(t-h,t+h)$ such that $\mu_i(s) = 0$ for all $s \in (t-h,t+h)$. In that case, $\dot{\mu}_i(s) = 0$ for all $s \in (t-h,t+h)$. Since $\dot{\mu}_i(s) = \lambda_1 \nabla_i f(x(s))$ for all $s \in (t-h,t+h)$, we then have $\nabla_i f(x(s)) = 0$ for all $s \in (t-h,t+h)$, which proves the theorem.

We have shown above that $\mu_i(t) = 0$ only at isolated points and $v_i(t)$ is bounded above. So, $\dfrac{1}{\mu_i(t)^2 v_i(t)^{-0.5}}$ is bounded above except at isolated points.
Since $\mu_i(t)^2 v_i(t)^{-0.5}$ is integrable and $\dfrac{1}{\mu_i(t)^2 v_i(t)^{-0.5}}$ is bounded above except at isolated points, we have $\dfrac{1}{\mu_i(t)^2 v_i(t)^{-0.5}}$ is integrable. Since $v_i(t)$ is bounded above and $\dfrac{1}{\mu_i(t)^2 v_i(t)^{-0.5}}$ is integrable, we have $\dfrac{1}{\mu_i(t)^2 v_i(t)^{-1.5}}$ integrable.
Now, we apply Cauchy-Schwartz inequality on the functions $\mu_i(t) v_i(t)^{-0.75} \norm{\nabla_i f(x(t))}$ and $\dfrac{1}{\mu_i(t) v_i(t)^{-0.75}}$. Since we have $\mu_i(t)^2 v_i(t)^{-1.5} \norm{\nabla_i f(x(t))}^2$ and $\dfrac{1}{\mu_i(t)^2 v_i(t)^{-1.5}}$ integrable, the Cauchy-Schwartz inequality implies that $\norm{\nabla_i f(x(t))}$ is integrable. Since $\norm{\nabla_i f(x(t))}$ is bounded and integrable, it is also square-integrable. Thus, $\norm{\nabla f(x(t)}^2$ is integrable.

So we have shown that $\norm{\nabla f(x(t)}^2$ is integrable and $\Dot{x}(t)$ is bounded above.
Following the same argument in last two paragraphs in the proof of Theorem~\ref{thm:adagrad}, under Assumption~\ref{assump_2}, we conclude that $\lim_{t \to \infty} \nabla f(x(t)) = 0_d$.
\end{proof}

\begin{figure*}[htb!]
\centering
\begin{subfigure}{.5\textwidth}
  \begin{center}
  \includegraphics[width = \textwidth]{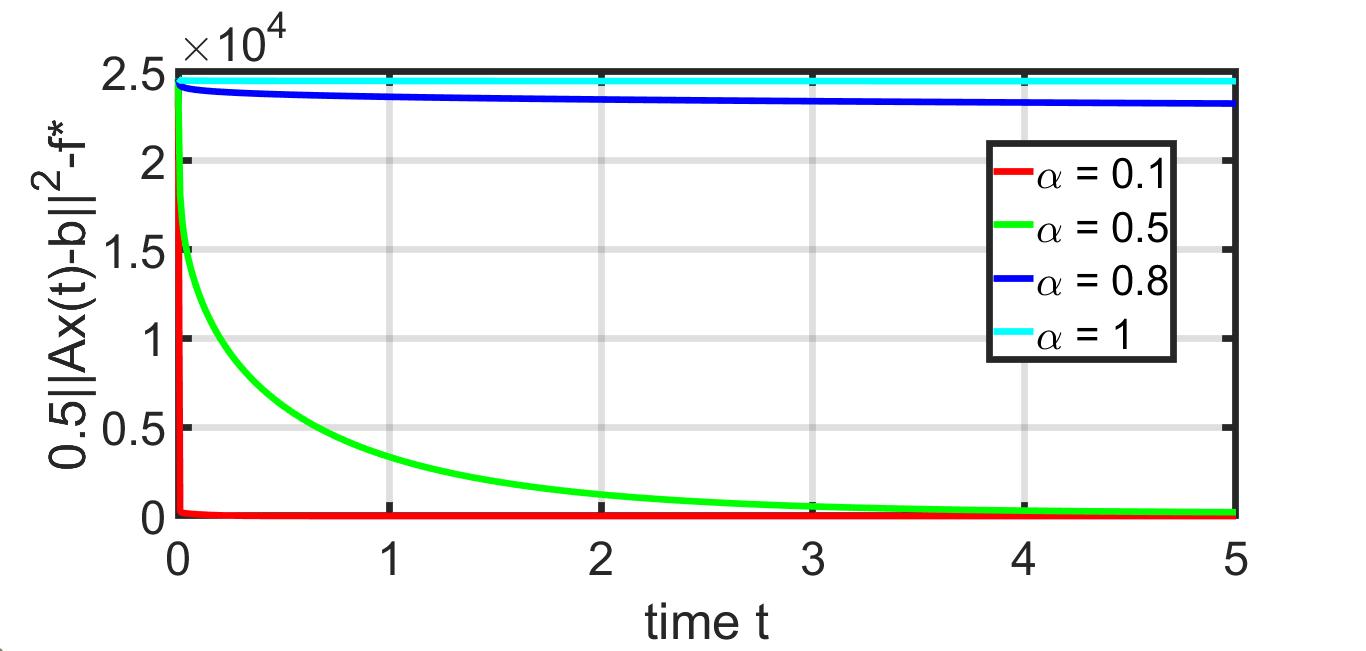}
  \caption{Generalized AdaGrad}
  \label{fig:mnist_adagrad}
  \end{center}
\end{subfigure}%
\begin{subfigure}{.5\textwidth}
  \begin{center}
  \includegraphics[width = \textwidth]{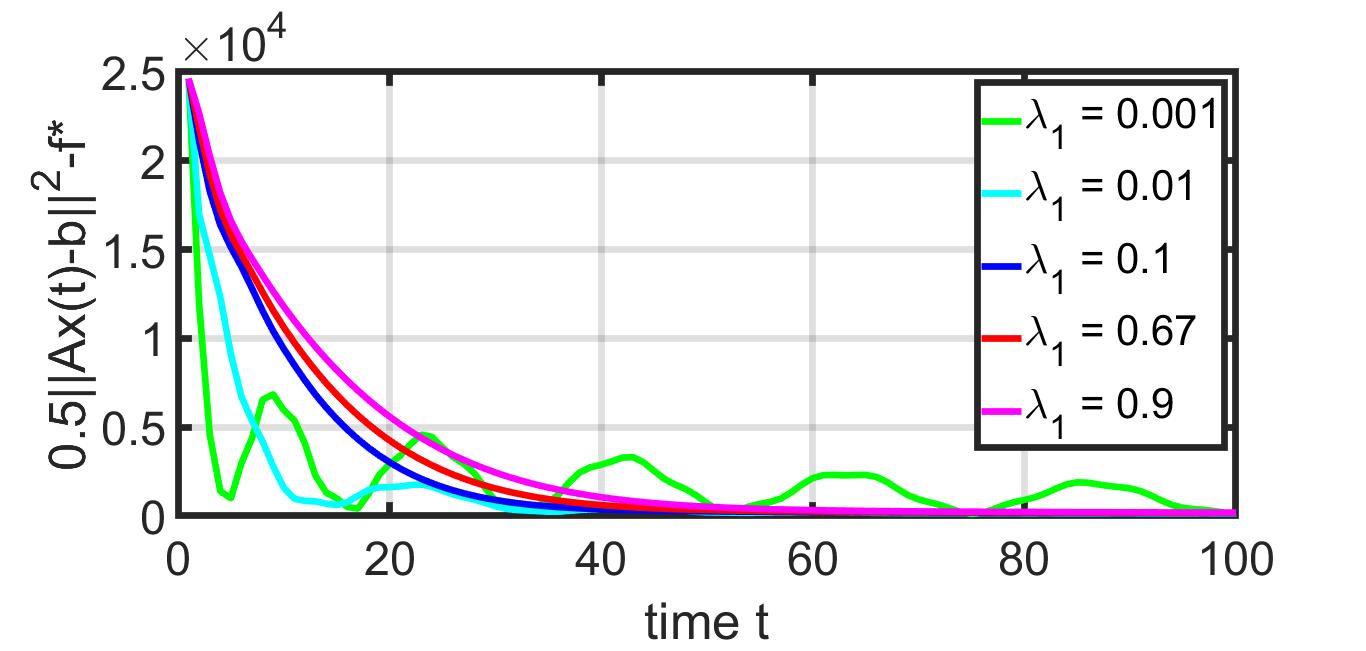}
  \caption{Adam}
  \label{fig:mnist_adam}
  \end{center}
\end{subfigure}
\caption{\footnotesize{\it Optimality gap $\frac{1}{2}\norm{Ax(t)-B}^2-f^*$ for linear regression problem with MNIST dataset, under the algorithms (a) G-AdaGrad and (b) Adam. For the G-AdaGrad algorithm, $x_c(0) = x(0) = [0.01,\ldots,0.01]^T$, and $\alpha$ is represented by different colors. For the Adam algorithm, $\mu(1) = [0,\ldots,0]^T$, $v(1) = x(1) = [0.01,\ldots,0.01]^T$, $\lambda_2 = 0.0067$, and $\lambda_1$ is represented by different colors.}}
\label{fig:mnist}
\end{figure*}

\begin{figure*}[htb!]
\centering
\begin{subfigure}{.5\textwidth}
  \begin{center}
  \includegraphics[width = \textwidth]{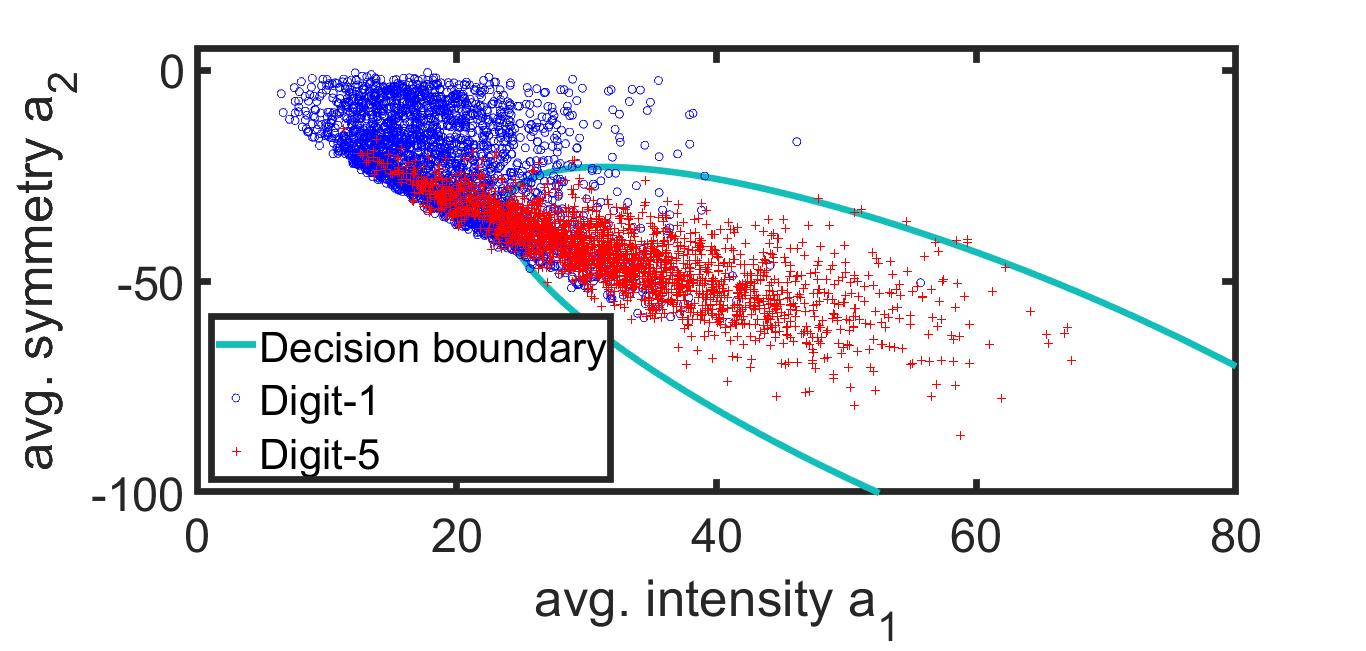}
  \caption{Training Data}
  \label{fig:train}
  \end{center}
\end{subfigure}%
\begin{subfigure}{.5\textwidth}
  \begin{center}
  \includegraphics[width = \textwidth]{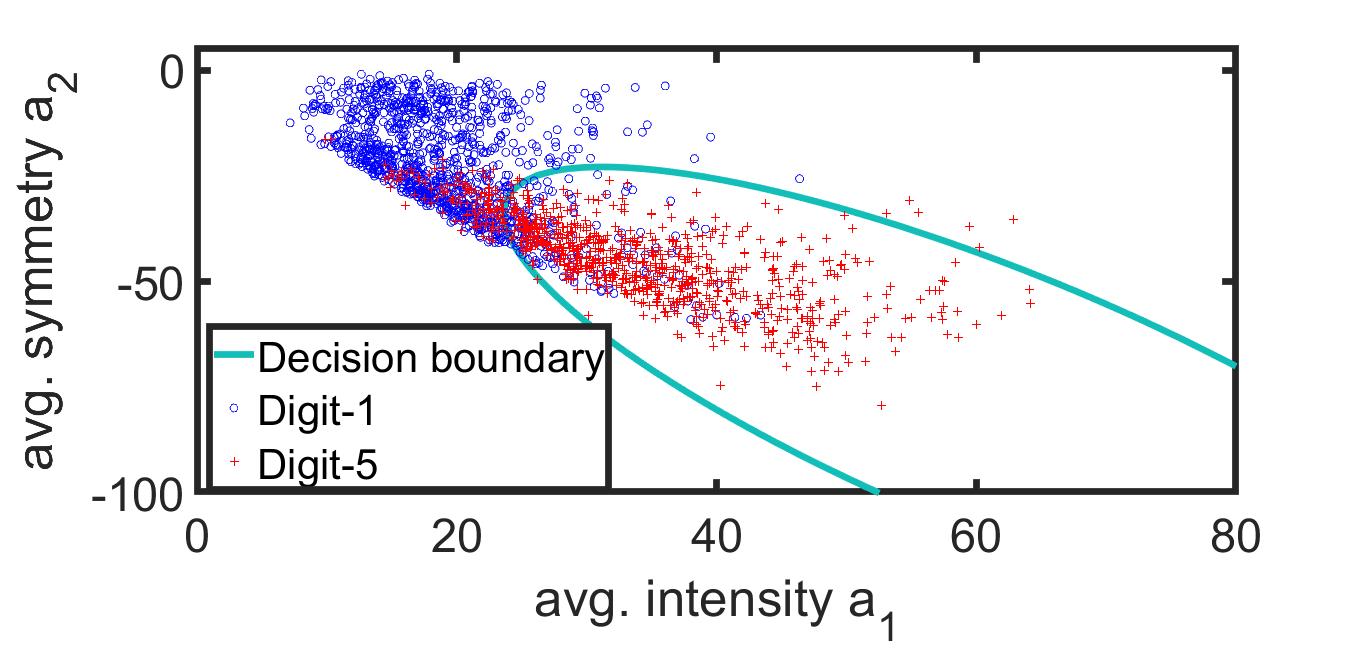}
  \caption{Test Data}
  \label{fig:test}
  \end{center}
\end{subfigure}
\caption{\footnotesize{\it Decision boundary in the $a_1-a_2$ plane, obtained from training a linear regression model for classification of digit-1 and digit-5. All the data points from (a) MNIST training set and (b) MNIST test set are plotted in $a_1-a_2$ plane. Digit-1 and digit-5 are represented by different colors.}}
\label{fig:scatter}
\end{figure*}

\begin{figure*}[htb!]
\centering
\begin{subfigure}{.5\textwidth}
  \begin{center}
  \includegraphics[width = \textwidth]{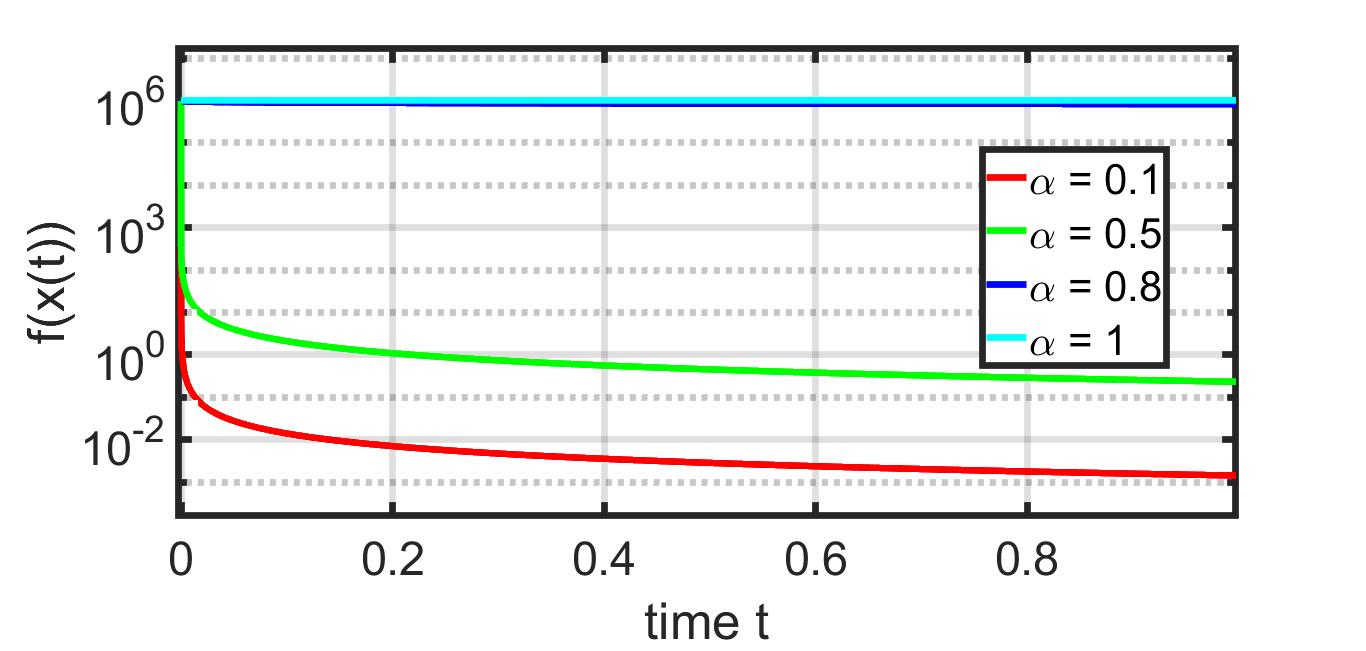}
  \caption{Generalized AdaGrad}
  \label{fig:log_adagrad}
  \end{center}
\end{subfigure}%
\begin{subfigure}{.5\textwidth}
  \begin{center}
  \includegraphics[width = \textwidth]{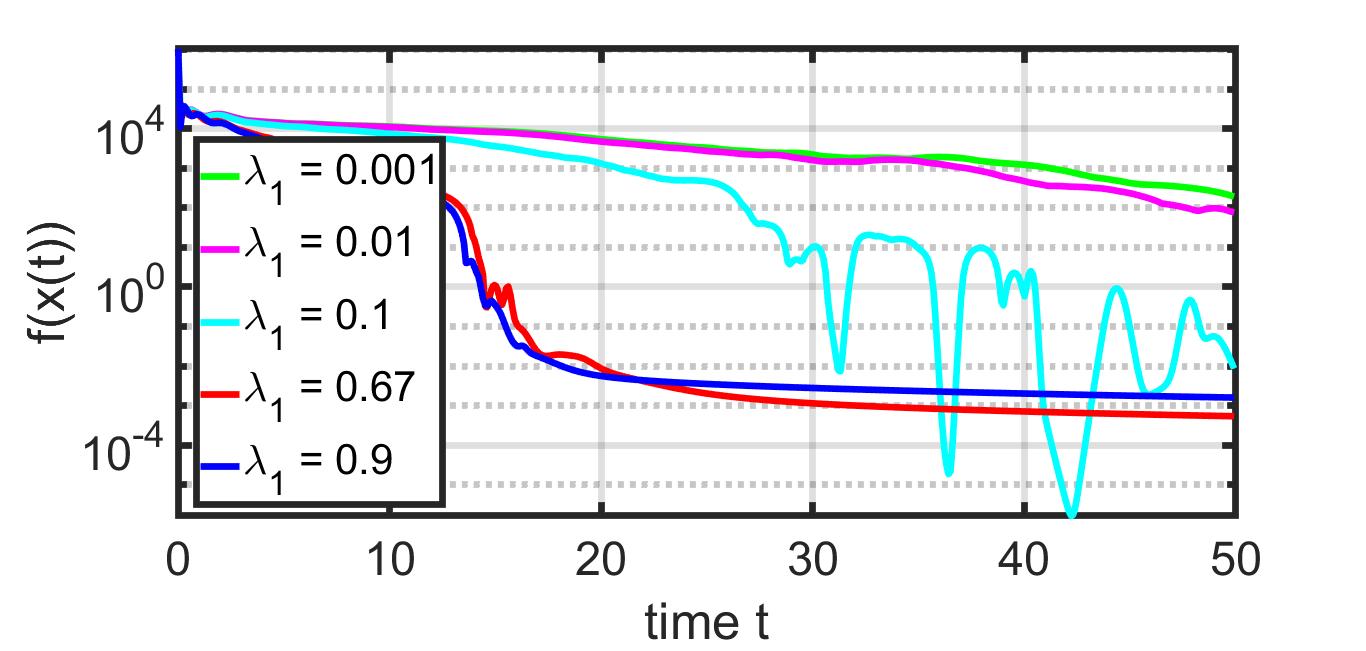}
  \caption{Adam}
  \label{fig:log_adam}
  \end{center}
\end{subfigure}
\caption{\footnotesize{\it Training loss of logistic regression model for classifying digit-1 and digit-5 from the MNIST dataset, under the algorithms (a) G-AdaGrad and (b) Adam. For the G-AdaGrad algorithm, $x_c(0) = x(0) = [0.01,\ldots,0.01]^T$, and $\alpha$ is represented by different colors. For the Adam algorithm, $\mu(1) = [0,\ldots,0]^T$, $v(1) = x(1) = [0.01,\ldots,0.01]^T$, $\lambda_2 = 0.0067$, and $\lambda_1$ is represented by different colors.}}
\label{fig:logreg}
\end{figure*}

\section{Experimental Results}
\label{sec:exp}

In this section, we present our experimental results validating the convergence guarantees from Section~\ref{sub:conv_adagrad} and Section~\ref{sub:conv_adam}. We consider the problem of recognizing handwritten digit one and digit five. 

Although it is a binary classification problem between the digits one and five, we solve it as a regression problem first. The obtained linear regression model can be a good initial decision boundary (ref. Fig.~\ref{fig:scatter}) for classification algorithms.
We conduct experiments for minimizing the {\em objective function} $f(x)=\frac{1}{2}\norm{Ax-B}^2$. The training data points $(A,B)$ are obtained from the \textit{``MNIST''}~\cite{MNIST} dataset as follows. We select $5000$ arbitrary training instances labeled as either the digit one or the digit five. For each instance, we calculate two quantities, namely the average intensity of an image and the average symmetry of an image~\cite{abu2012learning}. Let the column vectors $a_1$ and $a_2$ respectively denote the average intensity and the average symmetry of those $5000$ instances. We perform a quadratic {\em feature transform} of the data $(a_1,a_2)$. Then, our input matrix before pre-processing is $\Tilde{A} = \begin{bmatrix} a_1 & a_2 & a_1.^2 & a_1.*a_2 & a_2.^2 \end{bmatrix}$. Here, $(.*)$ represents element-wise multiplication and $(.^2)$ represents element-wise squares. This raw input matrix $\Tilde{A}$ is then pre-processed as follows. Each column of $\Tilde{A}$ is shifted by the mean value of the corresponding column and then divided by the standard deviation of that column. Finally, a $5000$-dimensional column vector of unity is appended to this pre-processed matrix. This is our final input matrix $A$ of dimension $(5000 \times 6)$. Next we consider the logistic regression model and conduct experiments for minimizing the cross-entropy error on the raw training data.

We train both of these models with the G-AdaGrad algorithm~\eqref{eqn:xc_evol}-\eqref{eqn:x_evol} and the Adam algorithm~\eqref{eqn:mu_evol}-\eqref{eqn:xm_evol}. We initialize the algorithms according to the conditions in Theorem~\ref{thm:adagrad} and Theorem~\ref{thm:adam}. Specifically, we initialize the G-AdaGrad algorithm with $x_c(0) = x(0) = [0.01,\ldots,0.01]^T$, and the Adam algorithm with $\mu(1) = [0,\ldots,0]^T$, $v(1) = x(1) = [0.01,\ldots,0.01]^T$. Moreover, we set $\lambda_2 = 0.0067$ for Adam.

G-AdaGrad converges for different values of $\alpha$ (ref. Fig.~\ref{fig:mnist_adagrad} and Fig.~\ref{fig:log_adagrad}). We observe that the convergence is faster when $\alpha$ is smaller. Thus, the coefficient $\alpha = 0.5$, which corresponds to the original AdaGrad method, is not the optimal choice. In addition, $\alpha=1$ leads to poor convergence, as we have theoretically explained in Section~\ref{sub:conv_adagrad}.

Fig.~\ref{fig:mnist_adam} and Fig.~\ref{fig:log_adam} show the effect of the relative values of $\lambda_1$ and $\lambda_2$ on the convergence of Adam algorithm. The standard choices for $\beta_1$ and $\beta_2$ in discrete-time Adam are respectively $0.9$ and $0.999$~\cite{kingma2014adam}. With a sampling time $\delta = 0.15$, from the relation between discrete-time and continuous-time Adam we have $\lambda_1 = 0.67$ and $\lambda_2 = 0.0067$ (ref. Section~\ref{sub:algo_adam}). Thus, our result in Fig.~\ref{fig:mnist_adam} and Fig.~\ref{fig:log_adam} agrees with the standard choices of these two parameters. A smaller or larger $\frac{\lambda_1}{\lambda_2}$ leads to oscillations or slows down the convergence. Note that, the condition~\eqref{eqn:lambda_cond} in Theorem~\ref{thm:adam} is satisfied with these standard parameter values. 
\section{Conclusion}

We proposed a fast optimizer, named Generalized AdaGrad or G-AdaGrad, a generalization of the prototypical AdaGrad algorithm. 
We acquired state-space frameworks of the G-AdaGrad algorithm and the Adam algorithm, governed by a set of ordinary differential equations. From the proposed state-space viewpoint, we presented simple convergence proofs of G-AdaGrad and Adam for non-convex optimization problems. Our analysis of G-AdaGrad provided further insights into the AdaGrad method. The theoretical results have been validated empirically on the \textit{MNIST} dataset. Future work involves analyzing variations of Adam, such as AdaShift~\cite{zhou2018adashift}, Nadam~\cite{dozat2016incorporating}, AdaMax~\cite{kingma2014adam}, that do not have theoretical guarantees, in the state-space framework proposed in this paper.


\bibliographystyle{unsrt}        
\bibliography{refs} 

\addtolength{\textheight}{-12cm}

\end{document}